  \providecommand\BibTeX{{%
    \normalfont B\kern-0.5em{\scshape i\kern-0.25em b}\kern-0.8em\TeX}}}
\newcommand{\ariel}{{{\textsc{ArieL}}}}
\begin{document}
\copyrightyear{2022}
\acmYear{2022}
\setcopyright{acmcopyright}
\acmConference[WWW '22] {Proceedings of the ACM Web Conference 2022}{April 25--29, 2022}{Virtual Event, Lyon, France.}
\acmBooktitle{Proceedings of the ACM Web Conference 2022 (WWW '22), April 25--29, 2022, Virtual Event, Lyon, France}
\acmPrice{15.00}
\acmISBN{978-1-4503-9096-5/22/04}
\acmDOI{10.1145/3485447.3512183}

\title{Adversarial Graph Contrastive Learning \\ with Information Regularization}

\author{Shengyu Feng}
\email{shengyu8@illinois.edu}
\orcid{1234-5678-9012}
\affiliation{%
  \institution{University of Illinois at Urbana-Champaign}
  \country{USA}
}
\author{Baoyu Jing}
\email{baoyuj2@illinois.edu}
\orcid{1234-5678-9012}
\affiliation{%
  \institution{University of Illinois at Urbana-Champaign}
  \country{USA}
}
\author{Yada Zhu}
\email{yzhu@us.ibm.com}
\orcid{1234-5678-9012}
\affiliation{%
  \institution{IBM Research}
  \country{USA}
}

\author{Hanghang Tong}
\email{htong@illinois.edu}
\orcid{1234-5678-9012}
\affiliation{%
  \institution{University of Illinois at Urbana-Champaign}
  \country{USA}
}

\begin{abstract}
Contrastive learning is an effective unsupervised method in graph representation learning. Recently, the data augmentation based contrastive learning method has been extended from images to graphs. However, most prior works are directly adapted from the models designed for images. Unlike the data augmentation on images, the data augmentation on graphs is far less intuitive and much harder to provide high-quality contrastive samples, which are the key to the performance of contrastive learning models. This leaves much space for improvement over the existing graph contrastive learning frameworks. In this work, by introducing an adversarial graph view and an information regularizer, we propose a simple but effective method, \textit{Adversarial Graph Contrastive Learning} (\ariel), to extract informative contrastive
samples within a reasonable constraint. It consistently outperforms the current graph contrastive learning methods in the node classification task over various real-world datasets and further improves the robustness of graph contrastive learning. The code is at \url{https://github.com/Shengyu-Feng/ARIEL}.

\end{abstract}
\begin{CCSXML}
<ccs2012>
   <concept>
       <concept_id>10002950.10003712</concept_id>
       <concept_desc>Mathematics of computing~Information theory</concept_desc>
       <concept_significance>500</concept_significance>
       </concept>
   <concept>
       <concept_id>10010147.10010257.10010293.10010294</concept_id>
       <concept_desc>Computing methodologies~Neural networks</concept_desc>
       <concept_significance>500</concept_significance>
       </concept>
   <concept>
       <concept_id>10010147.10010257.10010293.10010319</concept_id>
       <concept_desc>Computing methodologies~Learning latent representations</concept_desc>
       <concept_significance>500</concept_significance>
       </concept>
   <concept>
       <concept_id>10002950.10003624.10003633.10010917</concept_id>
       <concept_desc>Mathematics of computing~Graph algorithms</concept_desc>
       <concept_significance>500</concept_significance>
       </concept>
 </ccs2012>
\end{CCSXML}

\ccsdesc[500]{Mathematics of computing~Information theory}
\ccsdesc[500]{Computing methodologies~Neural networks}
\ccsdesc[500]{Computing methodologies~Learning latent representations}
\ccsdesc[500]{Mathematics of computing~Graph algorithms}

\keywords{graph representation learning, contrastive learning, adversarial training, mutual information}



\maketitle

\section{Introduction}
Contrastive learning is a widely used technique in various graph representation learning tasks. In contrastive learning, the model tries to minimize the distances among positive pairs and maximize the distances among negative pairs in the embedding space. The definition of positive and negative pairs is the key component in contrastive learning. Earlier methods like DeepWalk \cite{Perozzi:2014:DOL:2623330.2623732} and node2vec \cite{grover2016node2vec} define positive and negative pairs based on the co-occurrence of node pairs in the random walks. For knowledge graph embedding, it is a common practice to define positive and negative pairs based on translations \cite{NIPS2013_1cecc7a7, 10.5555/2893873.2894046, 10.5555/2886521.2886624, ji-etal-2015-knowledge, yan2021dynamic, wang2018acekg}.   

Recently, the breakthroughs of contrastive learning in computer vision have inspired some works to apply similar ideas from visual representation learning to graph representation learning. 
To name a few, Deep Graph Infomax (DGI) \cite{velickovic2018deep} extends Deep InfoMax \cite{hjelm2019learning} and achieves significant improvements over previous random-walk-based methods. Graphical Mutual Information (GMI) \cite{peng2020graph} uses the same framework as DGI but generalizes the concept of mutual information from vector space to graph domain. Contrastive multi-view graph representation learning (referred to as MVGRL in this paper) \cite{hassani2020contrastive} further improves DGI by introducing graph diffusion into the contrastive learning framework.  The more recent works often follow the data augmentation-based contrastive learning methods \cite{he2020momentum, chen2020simple}, which treat the data-augmented samples from the same instance as positive pairs and different instances as negative pairs. Graph Contrastive Coding (GCC) \cite{Qiu_2020} uses random walks with restart \cite{tong2006fast} to generate two subgraphs for each node as two data-augmented samples.
Graph Contrastive learning with Adaptive augmentation (GCA) \cite{Zhu_2021} introduces an adaptive data augmentation method that perturbs both the node features and edges according to their importance, and it is trained in a similar way as the famous visual contrastive learning framework SimCLR \cite{chen2020simple}. Its preliminary work, which uses uniform random sampling rather than adaptive sampling, is referred to as GRACE \cite{zhu2020deep} in this paper.
Robinson et al. \cite{robinson2021contrastive} propose a way to select hard negative samples based on the embedding space distances, and use it to obtain high-quality graph embedding. There are also many works \cite{you2020graph, zhao2020data} systemically studying the data augmentation on the graphs.
 
However, unlike the transformations on images, the transformations on graphs are far less intuitive to human beings. The data augmentation on the graph could be either too similar to or totally different from the original graph. This, in turn, leads to a crucial question, that is, {\em how to generate a new graph that is hard enough for the model to discriminate from the original one, and in the meanwhile also maintains the desired properties?} 

Inspired by some recent works
\cite{kim2020adversarial, jiang2020robust, ho2020contrastive, jovanovi2021robust, suresh2021adversarial}, 
we introduce the adversarial training on the graph contrastive learning and propose a new framework called \textit{\underline{A}dversarial G\underline{R}aph Contrast\underline{I}v\underline{E} \underline{L}earning} (\ariel). Through the adversarial attack on both topology and node features, we generate an adversarial sample from the original graph. On the one hand, since the perturbation is under the constraint, the adversarial sample still stays close enough to the original one. On the other hand, the adversarial attack makes sure the adversarial sample is hard to discriminate from the other view by increasing the contrastive loss. On top of that, we propose a new constraint
called information regularization which could stabilize the training of \ariel\ and prevent the collapsing. We demonstrate that the proposed \ariel\  outperforms the existing graph contrastive learning frameworks in the node classification task on both real-world graphs and adversarially attacked graphs. 

In summary, we make the following contributions:
\begin{itemize}
    \item We introduce the adversarial view as a new form of data augmentation in graph contrastive learning.
    \item We propose information regularization to stabilize adversarial graph contrastive learning.
    \item We empirically demonstrate that \ariel\ can achieve better performance and higher robustness compared with previous graph contrastive learning methods.
\end{itemize}

The rest of the paper is organized as follows. Section 2 gives the problem definition of graph representation learning and the preliminaries. Section 3 describes the proposed algorithm. The experimental results are presented in Section 4. After reviewing related work in Section 5, we conclude the paper in Section 6.

\section{Problem Definition}

In this section, we will introduce all the notations used in this paper and give a formal definition of our problem. Besides, we briefly introduce the preliminaries of our method.

\subsection{Graph Representation Learning}
For graph representation learning, let each node $v_i$ have a $d$-dimensional feature $\mathbf{X}[i,:]$, and all edges are assumed to be unweighted and undirected. We use a binary adjacency matrix $\mathbf{A}\in \{0,1\}^{n\times n}$ to represent the nodes and edges information, where $\mathbf{A}[i,j]=1$ if and only if the node pair $(v_i,v_j) \in \mathcal{E}$. In the following text, we will use $G=\{\mathbf{A},\mathbf{X}\}$ to represent the graph. 

The objective of the graph representation learning is to learn an encoder $f: \mathbb{R}^{n\times n}\times\mathbb{R}^{n\times d}\rightarrow\mathbb{R}^{n\times d'}$, which maps the nodes in the graph into low-dimensional embeddings. Denote the node embedding matrix $\mathbf{H}=f(\mathbf{A},\mathbf{X})$, where $\mathbf{H}[i,:]\in \mathbb{R}^{d'}$ is the embedding for node $v_i$. This representation could be used for downstream tasks like node classification.
\subsection{InfoNCE Loss}
InfoNCE loss \cite{oord2019representation} is the predominant work-horse of the contrastive learning loss, which maximizes the lower bound of the mutual information between two random variables. For each positive pair $(\mathbf{x},\mathbf{x}^+)$ associated with $k$ negative samples of $\mathbf{x}$, denoted as $\{\mathbf{x}_1^-, \mathbf{x}_2^-, \cdots,\mathbf{x}_k^-\}$, InfoNCE loss could be written as
\begin{align}
    L_k = -\log{(\frac{g(\mathbf{x},\mathbf{x}^+)}{g(\mathbf{x},\mathbf{x}^+)+\sum\limits_{i=1}^kg(\mathbf{x},\mathbf{x}_i^-)})}.
\end{align}
Here $g(\cdot)$ is the density ratio with the property that $ g(\mathbf{a},\mathbf{b}) \propto \frac{p(\mathbf{a}|\mathbf{b})}{p(\mathbf{a})}$, 
where $\propto$ stands for \textit{proportional to}. It has been shown by \cite{oord2019representation} that $-L_k$ actually serves as the lower bound of the mutual information $I(\mathbf{x};\mathbf{x}^+)$ with
\begin{align}
     I(\mathbf{x};\mathbf{x}^+)\geq \log{(k)}-L_k.
\end{align}
\subsection{Graph Contrastive Learning}
We build the proposed method upon the framework of SimCLR \cite{chen2020simple} since it is the state of the art contrastive learning method,
which is also the basic framework that GCA \cite{Zhu_2021} is built on. Given a graph $G$, two views of the graph $G_1=\{\mathbf{A}_1,\mathbf{X}_1\}$ and $G_2=\{\mathbf{A}_2,\mathbf{X}_2\}$ are first generated. This step can be treated as the data augmentation on the original graph, and various augmentation methods can be used herein. We use random edge dropping and feature masking as GCA does. The node embedding matrix for each graph can be computed as $\mathbf{H}_1=f(\mathbf{A}_1,\mathbf{X}_1)$ and $\mathbf{H}_2=f(\mathbf{A}_2,\mathbf{X}_2)$. The corresponding node pairs in two graph views are the positive pairs and all other node pairs are negative. Define $\theta(\mathbf{u},\mathbf{v})$ to be the similarity function between vectors $\mathbf{u}$ and $\mathbf{v}$, in practice, it's usually chosen as the cosine similarity on the projected embedding of each vector, using a two-layer neural network as the projection head. Denote $\mathbf{u}_i=\mathbf{H}_1[i,:]$ and $\mathbf{v}_i=\mathbf{H}_2[i,:]$, the contrastive loss is defined as
\begin{align}
\label{eq:contra}
L_{\text{con}}(G_1,G_2) = \frac{1}{2n}\sum_{i=1}^n(l(\mathbf{u}_i,\mathbf{v}_i)+l(\mathbf{v}_i,\mathbf{u}_i)),
\end{align}
\begin{align}
\nonumber
&l(\mathbf{u}_i,\mathbf{v}_i) =\\
&-\log{\frac{e^{\theta(\mathbf{u}_i,\mathbf{v}_i)/\tau}}{e^{\theta(\mathbf{u}_i,\mathbf{v}_i)/\tau} +\sum\limits_{j\neq i}e^{\theta(\mathbf{u}_i,\mathbf{v}_j)/\tau}+\sum\limits_{j\neq i}e^{\theta(\mathbf{u}_i,\mathbf{u}_j)/\tau}}},
\end{align}
where $\tau$ is a temperature parameter. $l(\mathbf{v}_i,\mathbf{u}_i)$ is symmetrically defined by exchanging the variables in $l(\mathbf{u}_i,\mathbf{v}_i)$. This loss is basically a variant of InfoNCE loss which is symmetrically defined instead.

In principle, our framework could be applied on any graph neural network (GNN) architecture as long as it could be attacked. For simplicity,
we employ a two-layer Graph Convolutional Network (GCN) \cite{kipf2017semisupervised} in this work. Define the symmetrically normalized adjacency matrix
\begin{align}
\hat{\mathbf{A}} =  \Tilde{\mathbf{D}}^{-\frac{1}{2}}\Tilde{\mathbf{A}}\Tilde{\mathbf{D}}^{-\frac{1}{2}}, 
\end{align}
where $\Tilde{\mathbf{A}} = \mathbf{A} +\mathbf{I}_n$ is the adjacency matrix with self-connections added and $\mathbf{I}_n$ is the identity matrix, $\Tilde{\mathbf{D}}$ is the diagonal degree matrix of $\Tilde{\mathbf{A}}$ with $\Tilde{\mathbf{D}}[i,i] = \sum_j\Tilde{\mathbf{A}}[i,j]$.   
The two-layer GCN is given as 
\begin{align}
    f(\mathbf{A},\mathbf{X}) = \sigma(\hat{\mathbf{A}}\sigma(\hat{\mathbf{A}}\mathbf{X}\mathbf{W}^{(1)})\mathbf{W}^{(2)}),
\end{align}
where $\mathbf{W}^{(1)}$ and $\mathbf{W}^{(2)}$ are the weights of the first and second layer respectively, $\sigma(\cdot)$ is the activation function.   

\subsection{Projected Gradient Descent Attack}
Projected Gradient Descent (PGD) attack \cite{madry2019deep} is an iterative attack method that projects the perturbation onto the ball of interest at the end of each iteration. Assuming that the loss $L(\cdot)$ is a function of the input matrix $\mathbf{Z} \in \mathbb{R}^{n\times d}$, at $t$-th iteration, the perturbation matrix $\mathbf{\Delta}_t\in\mathbb{R}^{n\times d}$ under an $l_{\infty}$-norm constraint could be written as
\begin{align}
    \mathbf{\Delta}_{t} = \Pi_{\|\mathbf{\Delta}\|_{\infty}\leq \delta} (\mathbf{\Delta}_{t-1}+ \eta\cdot \text{sgn}(\nabla_{\mathbf{\Delta}}L(\mathbf{Z}+\mathbf{\Delta}_{t-1})),
\end{align}
where $\eta$ is the step size, $\text{sgn}(\cdot)$ takes the sign of each element in the input matrix, and $\Pi_{\|\mathbf{\Delta}\|_{\infty}\leq \delta}$ projects the perturbation onto the $\delta$-ball in the $l_{\infty}$-norm. 

\section{Method}
In this section, we will first investigate the vulnerability of the graph contrastive learning, then we will spend the remaining section discussing each part of \ariel\ in detail.
\subsection{Vulnerability of the Graph Contrastive Learning}

Many GNNs are known to be vulnerable to adversarial attacks \cite{bojchevski2019adversarial, Z_gner_2018}, so we first investigate the vulnerability of the graph neural networks trained with the contrastive learning objective in Equation (\ref{eq:contra}). We generate a sequence of $60$ 
graphs by iteratively dropping edges and masking the features. Let $G_0=G$, for the $t$-th iteration, we generate $G_t$ from $G_{t-1}$ by randomly dropping the edges in $G_{t-1}$ and randomly masking the unmasked features, both with probability $p=0.03$. 
Since $G_t$ is guaranteed to contain less information than $G_{t-1}$, $G_t$ should be less similar to $G_0$ than $G_{t-1}$, on both the graph and node level. Denote the node embeddings of $G_t$ as $\mathbf{H}_t$, we measure the similarity $\theta(\mathbf{H}_t[i,:], \mathbf{H}_0[i,:])$, and it is expected that the similarity decreases as the iteration goes on. 

We generate the sequences on two datasets, \textit{Amazon-Computers} and \textit{Amazon-Photo} \cite{shchur2019pitfalls}, and the results are shown in Figure \ref{fig:simi1}.
\begin{figure}
    \centering
    \includegraphics[width=.8\linewidth]{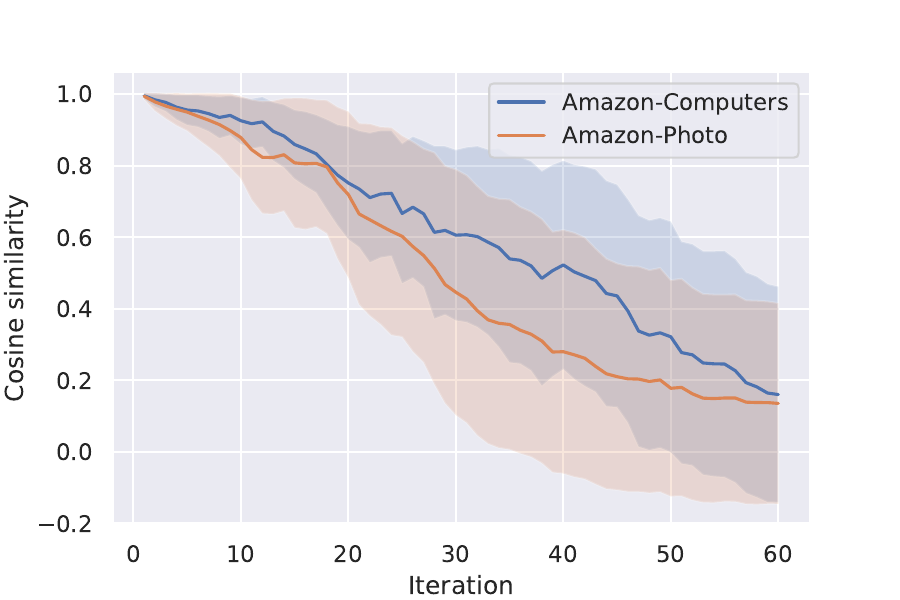}
    \caption{Average cosine similarity between the node embeddings of the original graph and the perturbed graph, results are on datasets Amazon-Computers and Amazon-Photo. The shaded area represents the standard deviation.}
    \label{fig:simi1}
\end{figure}
At the $30$-th iteration, with $0.97^{30}=40.10\%$ edges and features left, the average similarity of the positive samples is under $0.5$ on Amazon-Photo. At the $60$-th iteration, with $0.97^{60}=16.08\%$ edges and features left, the average similarity drops under $0.2$ on both Amazon-Computers and Amazon-Photo. Additionally, starting from the $30$-th iteration, the cosine similarity has around $0.3$ standard deviation for both datasets, which indicates that a lot of nodes are actually very sensitive to the external perturbations, even if we do not add any adversarial component but just mask out some information. These results demonstrate that the current graph contrastive learning framework is not trained over enough high-quality contrastive samples and is not robust to adversarial attacks.

Given this observation, we are motivated to build an adversarial graph contrastive learning framework that could improve the performance and robustness of the previous graph contrastive learning methods. The overview of our framework is shown in Figure \ref{fig:architecture}.
\begin{figure*}
    \centering
    \includegraphics[width=0.95\textwidth]{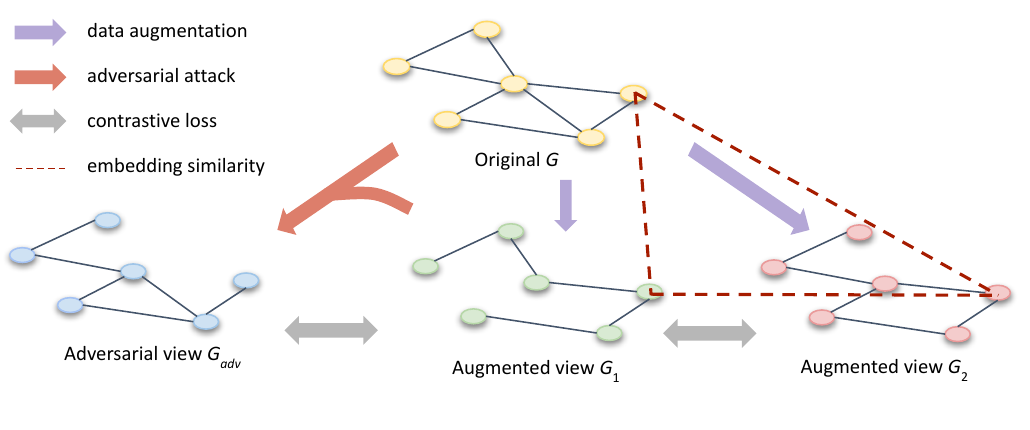}
    \caption{The overview of the proposed \ariel\ framework. For each iteration, two augmented views are generated from the original graph by data augmentation (purple arrows), and then an adversarial view is generated (red arrow) from the original graph by maximizing the contrastive loss against one of the augmented views. Besides, the similarities of the corresponding nodes (dashed lines) will get penalized by the information regularization if they exceed the estimated upper bound. The objective of \ariel\ is to minimize the contrastive loss (grey arrows) between the augmented views, the adversarial view, the corresponding augmented view, and the information regularization. Best viewed in color.}
    \label{fig:architecture}
\end{figure*}
\subsection{Adversarial Training}

Although most existing attack frameworks are targeted at supervised learning, it is natural to generalize these methods to contrastive learning by replacing the classification loss with the contrastive loss. The goal of the adversarial attack on contrastive learning is to maximize the contrastive loss by adding a small perturbation to the graph, which can be formulated as 
\begin{align}
    \label{eq:adv_sample}
    G_{\text{adv}} = \arg\max_{G'} L_{\text{con}}(G_1, G'), 
\end{align}
where $G'=\{\mathbf{A}', \mathbf{X}'\}$ is generated from the original graph $G$, and the change is constrained by the budget $\Delta_{\mathbf{A}}$ and $\Delta_{\mathbf{X}}$ as
\begin{align}
    &\sum_{i,j}|\mathbf{A}'[i,j]-\mathbf{A}[i,j]|\leq\Delta_{\mathbf{A}},\\
    &\sum_{i,j}|\mathbf{X}'[i,j]-\mathbf{X}[i,j]|\leq\Delta_{\mathbf{X}}.
\end{align}
We treat adversarial attacks as one kind of data augmentation. Although we find it effective to make the adversarial attack on one or two augmented views as well, we follow the typical contrastive learning procedure as in SimCLR \cite{chen2020simple} to make the attack on the original graph in this work. Besides, it does not matter whether $G_1$, $G_2$, or $G$ is chosen as the anchor for the adversary, each choice works in our framework and it can also be sampled as a third view. In our experiments, we use PGD attack \cite{madry2019deep} as our attack method. 

We generally follow the method proposed by Xu et al. \cite{xu2019topology} to make the PGD attack on the graph structure and apply the regular PGD attack method on the node features. Define the supplement of the adjacency matrix as $\mathbf{\bar{A}}=\mathbf{1}_{n\times n}-\mathbf{I}_n-\mathbf{A}$, where $\mathbf{1}_{n\times n}$ is the ones matrix of size $n\times n$. The perturbed adjacency matrix can be written as
\begin{align}
    \mathbf{A}_{\text{adv}} &= \mathbf{A}+ \mathbf{C}\circ \mathbf{L}_{\mathbf{A}},\\
    \mathbf{C} &= \bar{\mathbf{A}} - \mathbf{A},
\end{align}
where $\circ$ is the element-wise product, and $\mathbf{L}_{\mathbf{A}}\in\{0,1\}^{n\times n}$ with each element $\mathbf{L}_{\mathbf{A}}[i,j]$ corresponding to the modification (e.g., add, delete or no modification) of the edge between the node pair $(v_i,v_j)$. The perturbation on $\mathbf{X}$ follows the regular PGD attack procedure and the perturbed feature matrix can be written as 
\begin{align}
    \mathbf{X}_{\text{adv}} = \mathbf{X} + \mathbf{L}_{\mathbf{X}},
\end{align}
where $\mathbf{L}_{\mathbf{X}}\in\mathbb{R}^{n\times d}$ is the perturbation on the feature matrix.

For the ease of optimization, $\mathbf{L}_{\mathbf{A}}$ is relaxed to its convex hull $\Tilde{\mathbf{L}}_{\mathbf{A}}\in[0,1]^{n\times n}$, which satisfies $\mathcal{S}_{\mathbf{A}}=\{\Tilde{\mathbf{L}}_{\mathbf{A}}|\sum\limits_{i,j}\Tilde{\mathbf{L}}_{\mathbf{A}}\leq \Delta_{\mathbf{A}}, \Tilde{\mathbf{L}}_{\mathbf{A}}\in[0,1]^{n\times n}\}$. The constraint on $\mathbf{L}_{\mathbf{X}}$ can be written as $\mathcal{S}_{\mathbf{X}}=\{\mathbf{L}_{\mathbf{X}}|\| \mathbf{L}_{\mathbf{X}}\|_{\infty}\leq\delta_{\mathbf{X}}, \mathbf{L}_{\mathbf{X}}\in\mathbb{R}^{n\times d}\}$, where we directly treat $\delta_{\mathbf{X}}$ as the constraint on the feature perturbation. In each iteration, we make the updates
\begin{align}
    \label{eq:pgd_a}
    \Tilde{\mathbf{L}}_{\mathbf{A}}^{(t)} &= \Pi_{\mathcal{S}_{\mathbf{A}}}[\Tilde{\mathbf{L}}_{\mathbf{A}}^{(t-1)}+\alpha\cdot\mathbf{G}_{\mathbf{A}}^{(t)}],\\
    \label{eq:pgd_x}
    \mathbf{L}_{\mathbf{X}}^{(t)} & =\Pi_{\mathcal{S}_{\mathbf{X}}}[\mathbf{L}_{\mathbf{X}}^{(t-1)}+\beta\cdot\text{sgn}(\mathbf{G}_{\mathbf{X}}^{(t)})],
\end{align}
where $t$ denotes the current number of iterations, and 
\begin{align}
\mathbf{G}_{\mathbf{A}}^{(t)}&=\nabla_{\Tilde{\mathbf{L}}_{\mathbf{A}}}   L_{\text{con}}(G_1,G_{\text{adv}}^{(t-1)}),\\ \mathbf{G}_{\mathbf{X}}^{(t)}&=\nabla_{\mathbf{L}_{\mathbf{X}}}  L_{\text{con}}(G_1,G_{\text{adv}}^{(t-1)}),
\end{align}
denote the gradients of the loss with respect to $\Tilde{\mathbf{L}}_{\mathbf{A}}$ and $\mathbf{L}_{\mathbf{X}}$  respectively. Here $G_{\text{adv}}^{(t-1)}$ is defined as $\{\mathbf{A}+ \mathbf{C}\circ \Tilde{\mathbf{L}}_{\mathbf{A}}^{(t-1)},  \mathbf{X} + \mathbf{L}_{\mathbf{X}}^{(t-1)}\}$.
The projection operation $\Pi_{\mathcal{S}_{\mathbf{X}}}$  simply clips $\mathbf{L}_{\mathbf{X}}$ into the range $[-\delta_{\mathbf{X}}, \delta_{\mathbf{X}}]$. The projection operation  $\Pi_{\mathcal{S}_{\mathbf{A}}}$ is calculated as  
\begin{align}
    \Pi_{\mathcal{S}_{\mathbf{A}}}(\mathbf{Z}) = \begin{cases}
    P_{[0,1]}[\mathbf{Z}-\mu \mathbf{1}_{n\times n}], & \text{if } \mu>0, \text{and}\\& \sum\limits_{i,j}{P_{[0,1]}[\mathbf{Z}-\mu \mathbf{1}_{n\times n}]}=\Delta_{\mathbf{A}},\\
    \\
     P_{[0,1]}[\mathbf{Z}], & \text{if } \sum\limits_{i,j}{P_{[0,1]}[\mathbf{Z}]}\leq\Delta_{\mathbf{A}},
    \end{cases}
\end{align}
where $P_{[0,1]}[\mathbf{Z}]$ clips $\mathbf{Z}$ into the range $[0,1]$. We use the bisection method \cite{boyd_vandenberghe_2004} to solve the equation $\sum_{i,j} P_{[0,1]}[\mathbf{Z}-\mu \mathbf{1}_{n\times n}]=\Delta_{\mathbf{A}}$ with respect to the dual variable $\mu$.

To finally obtain $\mathbf{L}_{\mathbf{A}}$ from $\Tilde{\mathbf{L}}_{\mathbf{A}}$, each element is independently sampled from a Bernoulli distribution as $ \mathbf{L}_{\mathbf{A}}[i,j]\sim \text{Bernoulli}( \Tilde{\mathbf{L}}_{\mathbf{A}}[i,j])$.

\subsection{Adversarial Graph Contrastive Learning}
To assimilate the graph contrastive learning and adversarial training together, we treat the adversarial view $G_{\text{adv}}$ obtained from Equation (\ref{eq:adv_sample}) as another view of the graph. We define the adversarial contrastive loss as the contrastive loss between $G_1$ and $G_{\text{adv}}$. The adversarial contrastive loss is added to the original contrastive loss in Equation (\ref{eq:contra}), which becomes
\begin{align}\label{eq:obj}
L(G_1,G_2, G_{\text{adv}}) = L_{\text{con}}(G_1,G_2)+\epsilon_1 L_{\text{con}}(G_1,G_{\text{adv}}),
\end{align}
where $\epsilon_1 >0$ is the adversarial contrastive loss coefficient. We further adopt two additional subtleties on top of this basic framework: subgraph sampling and curriculum learning.  For each iteration, a subgraph $G_s$ with a fixed size is first sampled from the original graph $G$, then the data augmentation and adversarial attack are both conducted on this subgraph. The subgraph sampling could avoid the gradient derivation on the whole graph, which will lead to heavy computation on a large network. And for every $T$ epochs, the adversarial contrastive loss coefficient is multiplied by a weight $\gamma$. When $\gamma>1$, the portion of the adversarial contrastive loss is gradually increasing and the contrastive learning becomes harder as the training goes on.

\subsection{Information Regularization}
Adversarial training could effectively improve the model's robustness to the perturbations, nonetheless, we find these hard training samples could impose the additional risk of training collapsing, i.e., the model will be located at a bad parameter area at the early stage of the training, assigning higher probability to a highly perturbed sample than a mildly perturbed one. In our experiment, we find this vanilla adversarial training method may fail to converge in some cases (e.g., Amazon-Photo dataset). To stabilize the training, we add one constraint termed {\em information regularization}.

The data processing  inequality \cite{Cover2006} states that for three random variables $\mathbf{Z}_1$, $\mathbf{Z}_2$ and $\mathbf{Z}_3\in\mathbb{R}^{n\times d'}$, if they satisfy the Markov relation $\mathbf{Z_1}\rightarrow\mathbf{Z}_2\rightarrow\mathbf{Z}_3$, then the inequality $I(\mathbf{Z}_1;\mathbf{Z}_3)\leq I(\mathbf{Z}_1;\mathbf{Z}_2)$ holds. As proved by Zhu et al. \cite{Zhu_2021}, since the node embeddings of two views  $\mathbf{H}_1$ and $\mathbf{H}_2$ are conditionally independent given the node embeddings of the original graph $\mathbf{H}$, they also satisfy the Markov relation with $\mathbf{H}_1\rightarrow\mathbf{H}\rightarrow\mathbf{H}_2$, and vice versa. Therefore, we can derive the following properties from their mutual information
\begin{align}
    I(\mathbf{H}_1; \mathbf{H}_2) &\leq I(\mathbf{H};\mathbf{H}_1), \\
    I(\mathbf{H}_1; \mathbf{H}_2) &\leq I(\mathbf{H};\mathbf{H}_2).
\end{align}
In fact, this inequality holds on each node. A sketch of the proof is that the embedding of each node $v_i$ is determined by all the nodes from its $l$-hop neighborhood if an $l$-layer GNN is used as the encoder, and this subgraph composed of its $l$-hop neighborhood also satisfies the Markov relation. Therefore, we can derive the more strict inequalities 
\begin{align}
\label{info_ineq1}
    I(\mathbf{H}_1[i,:]; \mathbf{H}_2[i,:]) &\leq I(\mathbf{H}[i,:];\mathbf{H}_1[i,:]), \\
\label{info_ineq2}
     I(\mathbf{H}_1[i,:]; \mathbf{H}_2[i,:]) &\leq I(\mathbf{H}[i,:];\mathbf{H}_2[i,:]).
\end{align}
Since $-L_{\text{con}}(G_1,G_2)$ is only a lower bound of the mutual information, directly applying the above constraints is hard, we only consider the constraints on the density ratio. Using the Markov relation for each node, we give the following theorem
\begin{theorem}
For two graph views $G_1$ and $G_2$ independently transformed from the graph $G$, the density ratio of their node embeddings $\mathbf{H}_1$ and $\mathbf{H}_2$ should satisfy $g(\mathbf{H}_2[i,:], \mathbf{H}_1[i,:])\leq g(\mathbf{H}_2[i,:], \mathbf{H}[i,:])$ and $g(\mathbf{H}_1[i,:], \mathbf{H}_2[i,:])\leq g(\mathbf{H}_1[i,:], \mathbf{H}[i,:])$, where $\mathbf{H}$ is the node embeddings of the original graph.
\end{theorem}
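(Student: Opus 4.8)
The plan is to reduce the claimed inequality between density ratios to an inequality between conditional densities, and then to obtain the latter from the conditional independence encoded in the Markov relation $\mathbf{H}_1[i,:]\rightarrow\mathbf{H}[i,:]\rightarrow\mathbf{H}_2[i,:]$. First I would invoke the defining property $g(\mathbf{a},\mathbf{b})\propto p(\mathbf{a}\mid\mathbf{b})/p(\mathbf{a})$ to rewrite both sides of the first claim. Since the first argument is $\mathbf{H}_2[i,:]$ in both $g(\mathbf{H}_2[i,:],\mathbf{H}_1[i,:])$ and $g(\mathbf{H}_2[i,:],\mathbf{H}[i,:])$, the marginal $p(\mathbf{H}_2[i,:])$ and the proportionality constant are common to the two expressions (and equal to $1$ if $g$ is the optimal InfoNCE density ratio). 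Consequently $g(\mathbf{H}_2[i,:],\mathbf{H}_1[i,:])\leq g(\mathbf{H}_2[i,:],\mathbf{H}[i,:])$ is equivalent to the pointwise statement $p(\mathbf{H}_2[i,:]\mid\mathbf{H}_1[i,:])\leq p(\mathbf{H}_2[i,:]\mid\mathbf{H}[i,:])$.

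Next I would expand the left-hand conditional using the Markov structure. Because $\mathbf{H}_1[i,:]$ and $\mathbf{H}_2[i,:]$ are conditionally independent given $\mathbf{H}[i,:]$, marginalizing over the original embedding gives
\[
p(\mathbf{H}_2[i,:]\mid\mathbf{H}_1[i,:]) = \int p(\mathbf{H}_2[i,:]\mid\mathbf{h})\, p(\mathbf{h}\mid\mathbf{H}_1[i,:])\, d\mathbf{h},
\]
so that $p(\mathbf{H}_2[i,:]\mid\mathbf{H}_1[i,:])$ is exactly the posterior average of $p(\mathbf{H}_2[i,:]\mid\mathbf{h})$ under $\mathbf{h}\sim p(\cdot\mid\mathbf{H}_1[i,:])$. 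This reduces the theorem to comparing this average against the single value obtained by conditioning directly on the realized original embedding $\mathbf{H}[i,:]$.

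The main obstacle is the final step: passing from the averaged conditional to the pointwise bound. The clean way to close it is to note that $\mathbf{H}_2[i,:]$ is produced from the original embedding $\mathbf{H}[i,:]$ through the augmentation channel, so the true source dominates the likelihood of the observation, i.e. $p(\mathbf{H}_2[i,:]\mid\mathbf{h})$ is maximized over $\mathbf{h}$ at $\mathbf{h}=\mathbf{H}[i,:]$; the posterior average is then bounded above by this maximum, which yields $p(\mathbf{H}_2[i,:]\mid\mathbf{H}_1[i,:])\leq p(\mathbf{H}_2[i,:]\mid\mathbf{H}[i,:])$. I would state this source-domination property explicitly as the working assumption, and observe as a consistency check that taking $\log$ and then expectation over the joint distribution recovers the data processing inequality $I(\mathbf{H}_1[i,:];\mathbf{H}_2[i,:])\leq I(\mathbf{H}[i,:];\mathbf{H}_2[i,:])$ of Equation (\ref{info_ineq2}), so the pointwise statement is the stronger, un-integrated form of the inequality already established for the mutual information.

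Finally, the second inequality $g(\mathbf{H}_1[i,:],\mathbf{H}_2[i,:])\leq g(\mathbf{H}_1[i,:],\mathbf{H}[i,:])$ follows by the identical argument after exchanging the roles of $G_1$ and $G_2$, using the symmetric Markov relation $\mathbf{H}_2[i,:]\rightarrow\mathbf{H}[i,:]\rightarrow\mathbf{H}_1[i,:]$ that is guaranteed by the independence of the two augmentations, and it integrates to Equation (\ref{info_ineq1}) in the same way.
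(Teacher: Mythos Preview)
Your approach matches the paper's: both reduce the density-ratio inequality to the conditional-density inequality $p(\mathbf{H}_2[i,:]\mid\mathbf{H}_1[i,:])\leq p(\mathbf{H}_2[i,:]\mid\mathbf{H}[i,:])$ via the Markov structure, then divide by the marginal and invoke $g(\mathbf{a},\mathbf{b})\propto p(\mathbf{a}\mid\mathbf{b})/p(\mathbf{a})$; the second inequality is handled by symmetry in both. The difference lies only in how the Markov relation is invoked at the key step. The paper writes the factorization directly as the product
\[
p(\mathbf{H}_2[i,:]\mid\mathbf{H}_1[i,:]) = p(\mathbf{H}_2[i,:]\mid\mathbf{H}[i,:])\,p(\mathbf{H}[i,:]\mid\mathbf{H}_1[i,:])
\]
without integrating over the intermediate variable, and then simply bounds the second factor by $1$. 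You instead write the correct mixture integral and then introduce a ``source-domination'' hypothesis (that $p(\mathbf{H}_2[i,:]\mid\mathbf{h})$ is maximized at the realized $\mathbf{H}[i,:]$) to pass from the posterior average to the pointwise value. In effect, you have isolated exactly the step the paper's shorter argument glosses over: its unintegrated product identity is only literally valid when $\mathbf{H}[i,:]$ is deterministic given $\mathbf{H}_1[i,:]$ (or treated as a fixed realization), which is also the regime where your extra assumption is vacuous. So your proposal is a more careful rendition of the same proof, making explicit the assumption that the paper leaves implicit in its chain-rule shortcut.
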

\begin{proof}
Following the Markov relation of each node, we get
\begin{align}
    p(\mathbf{H}_2[i,:]|\mathbf{H}_1[i,:])&=p(\mathbf{H}_2[i,:]|\mathbf{H}[i,:])p(\mathbf{H}[i,:]|\mathbf{H}_1[i,:])\\
    &\leq p(\mathbf{H}_2[i,:]|\mathbf{H}[i,:]),
\end{align}
and consequently
\begin{align}
    \frac{p(\mathbf{H}_2[i,:]|\mathbf{H}_1[i,:])}{p(\mathbf{H}_2[i,:])}\leq \frac{p(\mathbf{H}_2[i,:]|\mathbf{H}[i,:])}{p(\mathbf{H}_2[i,:])}.
\end{align}
Since $g(\mathbf{a},\mathbf{b}) \propto \frac{p(\mathbf{a}|\mathbf{b})}{p(\mathbf{a})}$, we get $g(\mathbf{H}_2[i,:], \mathbf{H}_1[i,:])\leq g(\mathbf{H}_2[i,:], \mathbf{H}[i,:])$. A similar proof applies to the other inequality.
\end{proof}
 Note that $g(\cdot,\cdot)$ is symmetric for the two inputs, we thus get two upper bounds for $g(\mathbf{H}_1[i,:], \mathbf{H}_2[i,:])$. According to the previous definition, $g(\mathbf{a},\mathbf{b})=e^{\theta(\mathbf{a},\mathbf{b})/\tau}$, we can simply replace $g(\cdot, \cdot)$ with $\theta(\cdot,\cdot)$ in the inequalities, then we combine these two upper bounds into one
\begin{align}
    2\cdot\theta(\mathbf{H}_1[i,:], \mathbf{H}_2[i,:]) \leq \theta(\mathbf{H}_2[i,:], \mathbf{H}[i,:])+\theta(\mathbf{H}_1[i,:], \mathbf{H}[i,:]).
\end{align}
This bound intuitively requires the similarity between $\mathbf{H}_1[i,:]$ and $\mathbf{H}_2[i,:]$ to be smaller than the similarity between $\mathbf{H}[i,:]$ and $\mathbf{H}_1[i,:]$ or $\mathbf{H}_2[i,:]$. 
Therefore, we define the following information regularization to penalize the higher probability of a less similar pair
\begin{align}
d_i = 2\cdot\theta(\mathbf{H}_1[i,:], \mathbf{H}_2[i,:]) &-(\theta(\mathbf{H}_2[i,:], \mathbf{H}[i,:])+\theta(\mathbf{H}_1[i,:], \mathbf{H}[i,:]))\\
    L_{I}(G_1, G_2, G) &= \frac{1}{n}\sum\limits_{i=1}^n{\max\{d_i, 0\}}.
\end{align}
Specifically, the information regularization could be defined over any three graphs that satisfy the Markov relation, but for our framework, to save the memory and time complexity, we avoid additional sampling and directly ground the information regularization on the existing graphs.

The final loss of \ariel\ can be written as 
\begin{align}
\label{eq:ARIEL}
    L(G_1,G_2, G_{\text{adv}}) = L_{\text{con}}(G_1,G_2)+\epsilon_1 L_{\text{con}}(G_1,G_{\text{adv}}) + \epsilon_2  L_{I}(G_1, G_2, G) ,
\end{align}
where $\epsilon_2 >0$ controls the strength of the information regularization.

The entire algorithm of \ariel\ is summarized in Algorithm \ref{alg:algorithm}.
\begin{algorithm}[tb]
\caption{Algorithm of \ariel}
\label{alg:algorithm}
\begin{algorithmic}[0] 
\STATE \textbf{Input data:} Graph $G=(\mathbf{A},\mathbf{X})$
\STATE \textbf{Input parameters:} $\alpha$, $\beta$, $\Delta_{\mathbf{A}}$, $\delta_{\mathbf{X}}$, $\epsilon_1$, $\epsilon_2$, $\gamma$ and $T$
\STATE Randomly initialize the graph encoder $f$
\FOR{iteration  $k=0,1,\cdots$}
\STATE Sample a subgraph $G_s$ from $G$
\STATE Generate two views $G_1$ and $G_2$ from $G_s$
\STATE Generate the adversarial view $G_{\text{adv}}$ according to Equations (\ref{eq:pgd_x}), (\ref{eq:pgd_a})
\STATE Update model $f$ to minimize $L(G_1,G_2, G_{\text{adv}})$ in Equation (\ref{eq:ARIEL})
\IF{$(k+1)\mod T =0$}
\STATE Update $\epsilon_1\leftarrow \gamma*\epsilon_1$
\ENDIF
\ENDFOR
\STATE \textbf{return:} Node embedding matrix  $\mathbf{H}=f(\mathbf{A},\mathbf{X})$ 
\end{algorithmic}
\end{algorithm}

\section{Experiments}
In this section, we conduct empirical evaluations, which are designed to answer the following three questions:
\begin{itemize}
    \item [RQ1.] How effective is the proposed \ariel\ in comparison with previous graph contrastive learning methods on the node classification task?
    \item [RQ2.] To what extent does \ariel\ gain robustness over the attacked graph?
    \item [RQ3.] How does each part of \ariel\ contribute to its performance?
\end{itemize}
We evaluate our method with the node classification task on both the real-world graphs and attacked graphs. The node embeddings are first learnt by the proposed \ariel\ algorithm, then the embeddings are fixed to perform the node classification with a logistic regression classifier trained over it. All our experiments are conducted on the NVIDIA Tesla V100S GPU with 32G memory.

\subsection{Experimental Setup}

\subsubsection{Datasets} We use six  datasets for the evaluation, including \textit{Cora}, \textit{CiteSeer}, \textit{Amazon-Computers}, \textit{Amazon-Photo}, \textit{Coauthor-CS} and \textit{Coauthor-Physics}. 
A summary of the dataset statistics is in Table \ref{tab:dataset}.
\begin{table}
\centering
\begin{tabular}{lcccc}
\toprule
Dataset & Nodes & Edges & Features & Classes \\
\midrule
Cora       & 2,708  & 5,429 &1,433 &7    \\
CiteSeer    &3,327  & 4,732 &3,703 &6     \\
Amazon-Computers & 13,752 & 245,861&767&10\\
Amazon-Photo & 7,650 &119,081&745&8\\
Coauthor-CS & 18,333&81,894&6,805&15\\
Coauthor-Physics&34,493&247,962&8,415&5\\
\bottomrule
\end{tabular}
\vspace{5pt}
\caption{Datasets statistics, number of nodes, edges, node feature dimensions, and classes are listed.}
\vspace*{-10pt}
\label{tab:dataset}
\end{table}

\subsubsection{Baselines} We consider six graph contrastive learning methods, including DeepWalk \cite{Perozzi:2014:DOL:2623330.2623732},  DGI \cite{velickovic2018deep},  GMI \cite{peng2020graph}, MVGRL \cite{hassani2020contrastive}, GRACE \cite{zhu2020deep} and GCA \cite{Zhu_2021}. Since DeepWalk only generates the embeddings for the graph topology, we concatenate the node features to the generated embeddings for the evaluation so that the final embeddings can incorporate both the topology and feature information. Besides, we also compare our method with two supervised methods Graph Convolutional Network (GCN) \cite{kipf2017semisupervised} and Graph Attention Network (GAT) \cite{velickovic2018graph}. 

\begin{table*}[h]
\centering
\begin{tabular}{ccccccc}
\toprule
\textbf{Method}  & \textbf{Cora} & \textbf{CiteSeer} & \textbf{Amazon-Computers} & \textbf{Amazon-Photo} & \textbf{Coauthor-CS} & \textbf{Coauthor-Physics} \\
\midrule
GCN &$84.14\pm0.68$&$69.02\pm0.94$&$88.03\pm1.41$&$92.65\pm0.71$&$92.77\pm0.19$&$95.76\pm0.11$\\
GAT&$83.18\pm1.17$&$69.48\pm1.04$&$85.52\pm2.05$&$91.35\pm1.70$&$90.47\pm0.35$&$94.82\pm 0.21$\\
\midrule
DeepWalk+features&$79.82\pm0.85$&$67.14\pm0.81$&$86.23\pm0.37$&$90.45\pm0.45$&$85.02\pm0.44$&$94.57\pm0.20$\\
DGI     &$84.24\pm0.75$   & $69.12\pm1.29$ & $88.78\pm0.43$   & $92.57\pm0.23$&$92.26\pm0.12$&$95.38\pm0.07$\\
GMI &$82.43\pm0.90$&$70.14\pm1.00$&$83.57\pm0.40$&$88.04\pm0.59$&OOM&OOM\\
MVGRL   &\textbf{84.39 $\pm$ 0.77}   &$69.85\pm1.54$& $89.02\pm0.21$&$92.92\pm0.25$ & $92.22\pm0.22$&$95.49\pm0.17$\\

GRACE   &$83.40\pm1.08$ & $69.47\pm1.36$  &  $87.77\pm0.34$ & $92.62\pm0.25$ &$93.06\pm 0.08$ &$95.64\pm0.08$\\
GCA-DE &$82.57\pm0.87$& $72.11\pm0.98$ & $88.10\pm0.33$ &$92.87\pm0.27$ &$93.08\pm0.18$&$95.62\pm0.13$\\
GCA-PR &$82.54\pm0.87$& $72.16\pm0.73$ & $88.18\pm0.39$ &$92.85\pm0.34$ &$93.09\pm0.15$&$95.58\pm0.12$\\
GCA-EV &$81.80\pm0.92$& $67.07\pm0.79$ & $87.95\pm0.43$ &$92.63\pm0.33$ &$93.06\pm0.14$&$95.64\pm0.08$\\
\midrule
\textbf{ARIEL}  & 84.28 $\pm$ 0.96 & \textbf{72.74 $\pm$ 1.10} &\textbf{91.13 $\pm$ 0.34} &\textbf{94.01 $\pm$ 0.23}&\textbf{93.83 $\pm$ 0.14}&\textbf{95.98 $\pm$ 0.05}    \\
\bottomrule
\end{tabular}
\vspace{5pt}
\caption{Node classification accuracy in percentage on six real-world datasets. We bold the results with the best mean accuracy. The methods above the line are the supervised ones, and the ones below the line are unsupervised. OOM stands for Out-of-Memory on our 32G GPUs.}
\vspace*{-10pt}
\label{tab:results}
\end{table*}

\subsubsection{Evaluation protocol} For each dataset, we randomly select $10\%$ nodes for training, $10\%$ nodes for validation and the remaining for testing. For contrastive learning methods, a logistic regression classifier is trained to do the node classification over the node embeddings. The accuracy is used as the evaluation metric. 

Besides testing on the original, clean graphs, we also evaluate our method on the attacked graphs. We use Metattack \cite{zugner2018adversarial} to perform the poisoning attack. 
For \ariel\, we use the hyperparameters of the best models we obtain on the clean graphs for evaluation. For GCA, we report the performance for its three variants, GCA-DE, GCA-PR, and GCA-EV, which correspond to the adoption of degree, eigenvector and PageRank \cite{ilprints422, kang2018aurora} centrality measures, in our main results and use the best variant on each dataset for the evaluation on the attacked graphs.
\subsubsection{Hyperparameters}
For \ariel\, we use the same parameters and design choices for the network
architecture, optimizer, and training scheme as in GRACE and GCA on each dataset. However, we find GCA not behave well on Cora with a significant performance drop, so we re-search the parameters for GCA on Cora separately and use a different temperature for it. Other contrastive learning-specific parameters are kept the same over GRACE, GCA, and \ariel.  

All GNN-based baselines use a two-layer GCN as the encoder. For each method, we compare its default hyperparameters and the ones used by \ariel, and use the hyperparameters leading to better performance. Other algorithm-specific hyperparameters all respect the default setting in its official implementation.

Other hyperparameters of \ariel\ are summarized in the Appendix.
\begin{table*}
\centering
\begin{tabular}{ccccccc}
\toprule
\textbf{Method}  & \textbf{Cora} & \textbf{CiteSeer} & \textbf{Amazon-Computers} & \textbf{Amazon-Photos} & \textbf{Coauthor-CS} & \textbf{Coauthor-Physics} \\
\midrule
GCN &$80.03\pm0.91$&$62.98\pm1.20$& $84.10\pm1.05$ & $91.72\pm0.94$ & $80.32\pm0.59$ & $87.47\pm0.38$\\
GAT &$79.49\pm1.29$&$63.30\pm1.11$&$81.60\pm1.59$&$90.66\pm 1.62$ & $77.75\pm0.80$& $86.65\pm0.41$\\
\midrule
DeepWalk+features &$74.12\pm1.02$&$63.20\pm0.80$&$79.08\pm0.67$&$88.06\pm0.41$& $49.30\pm 1.23$ & $79.26\pm1.38$\\
DGI &$80.84\pm0.82$&$64.25\pm0.96$ & $83.36\pm 0.55$ & $91.27\pm0.29$&$78.73\pm0.50$ & $85.88\pm0.37$   \\
GMI &79.17 $\pm$ 0.76&$65.37\pm1.03$& $77.42\pm0.59$& $89.44\pm0.47$& $80.92\pm0.64$ & $87.72\pm0.45$\\
MVGRL & \textbf{80.90 $\pm$ 0.75}&$64.81\pm1.53$ & $83.76\pm0.69$ & $91.76\pm0.44$ & $79.49 \pm 0.75$ & $86.98\pm0.61$\\
GRACE &$78.55\pm0.81$ & $63.17\pm 1.81$ & $84.74\pm1.13$ & $91.26\pm0.37$ &$80.61\pm0.63$ & $85.71\pm0.38$ \\
GCA &$76.79\pm0.97$&$64.89\pm1.33$& $85.05\pm0.51$ &$91.71\pm0.34$&$82.72\pm0.58$&$89.00\pm0.31$\\
\midrule
\textbf{ARIEL}  &  80.33 $\pm$ 1.25 & \textbf{69.13 $\pm$ 0.94} & \textbf{88.61 $\pm$ 0.46} & \textbf{92.99 $\pm$ 0.21} & \textbf{84.43 $\pm$ 0.59} & \textbf{89.09 $\pm$ 0.31} \\
\bottomrule
\end{tabular}
\vspace{5pt}
\caption{Node classification accuracy in percentage on the graphs under Metattack, where subgraphs of Amazon-Computers, Amazon-Photo, Coauthor-CS, and Coauthor-Physics are used for attack and their results are not directly comparable to those in Table \ref{tab:results}. We bold the results with the best mean accuracy. GCA is evaluated on its best variant on each clean graph.}
\vspace*{-10pt}
\label{tab:att_results}
\end{table*}
\subsection{Main Results}
The comparison results of node classification on all six datasets are summarized in Table \ref{tab:results}. 

Our method \ariel\ outperforms baselines over all datasets except on Cora, with only $0.11\%$ lower in accuracy than MVGRL. It can be seen that the state-of-the-art method GCA does not bear significant improvements over previous methods.
In contrast, \ariel\ can achieve consistent improvements over GRACE and GCA on all datasets, especially on Amazon-Computers with almost $3\%$ gain. 

Besides, we find MVGRL a solid baseline whose performance is close to or even better than GCA on these datasets. It achieves the highest score on Cora and the second-highest on Amazon-Computers and Amazon-Photo. However, it does not behave well on CiteSeer, where GCA can effectively increase the score of GRACE. To sum up, previous modifications over the grounded frameworks are mostly based on specific knowledge, for example, MVGRL introduces the diffusion matrix to DGI and GCA defines the importance on the edges and features, and they cannot consistently take effect on all datasets. However, \ariel\ uses the adversarial attack to automatically construct the high-quality contrastive samples and achieves more stable performance improvements.

Finally, in comparison with the supervised methods, \ariel\ also achieves a clear advantage over all of them. Although it would be premature to conclude that \ariel\ is more powerful than these supervised methods since they are usually tested under the specific training-testing split, these results do demonstrate that \ariel\ can indeed generate highly expressive node embeddings for the node classification task, which can achieve comparable performance to the supervised methods.

\subsection{Results under Attack}
 The results on attacked graphs are summarized in Table \ref{tab:att_results}. Specifically, we evaluate all these methods on the attacked subgraph of Amazon-Computers, Amazon-Photo, Coauthor-CS, and Coauthor-Physics, so their results are not directly comparable to the results in Table \ref{tab:results}.

It can be seen that although some baselines are robust on specific datasets, for example, MVGRL on Cora, GMI on CiteSeer, and GCA on Coauthor-CS and Coauthor-Physics, they fail to achieve consistent robustness over all datasets. Although GCA indeed makes GRACE more robust, it is still vulnerable on Cora, CiteSeer, and Amazon-Computers, with more than $3\%$ lower than \ariel\ in the final accuracy. 

Basically, MVGRL and GCA can improve the robustness of their respective grounded frameworks over different datasets, but we find this kind of improvement relatively minor. Instead, \ariel\ has more significant improvements and greatly increases robustness.

\subsection{Ablation Study}

For this section, we first set $\epsilon_2$ as $0$ and investigate the role of adversarial contrastive loss. The adversarial contrastive loss coefficient $\epsilon_1$ controls the portion of the adversarial contrastive loss in the final loss. When $\epsilon_1=0$, the final loss reduces to the regular contrastive loss in Equation (\ref{eq:contra}). To explore the effect of the adversarial contrastive loss, we fix other parameters in our best models on Cora and CiteSeer and gradually increase $\epsilon_1$ from $0$ to $2$. The changes in the final performance are shown in Figure \ref{fig:eps1}.

The dashed line represents the performance of GRACE with subgraph sampling, i.e., $\epsilon_1=0$. Although there exist some variations, \ariel\ is always above the baseline under a positive $\epsilon_1$ with around $2\%$ improvement. The subgraph sampling trick may sometimes help the model, for example, it improves GRACE without subgraph sampling by $1\%$ on CiteSeer, but it could be detrimental as well, such as on Cora. This is understandable since subgraph sampling can simultaneously enrich the data augmentation and lessen the number of negative samples, both critical to contrastive learning. While for the adversarial contrastive loss, has a stable and significant improvement on GRACE with subgraph sampling, which demonstrates that the performance improvement of \ariel\ mainly stems from the adversarial loss rather than the subgraph sampling.
\begin{figure}[h]
    \begin{minipage}{0.48\linewidth}
    \includegraphics[width=\linewidth]{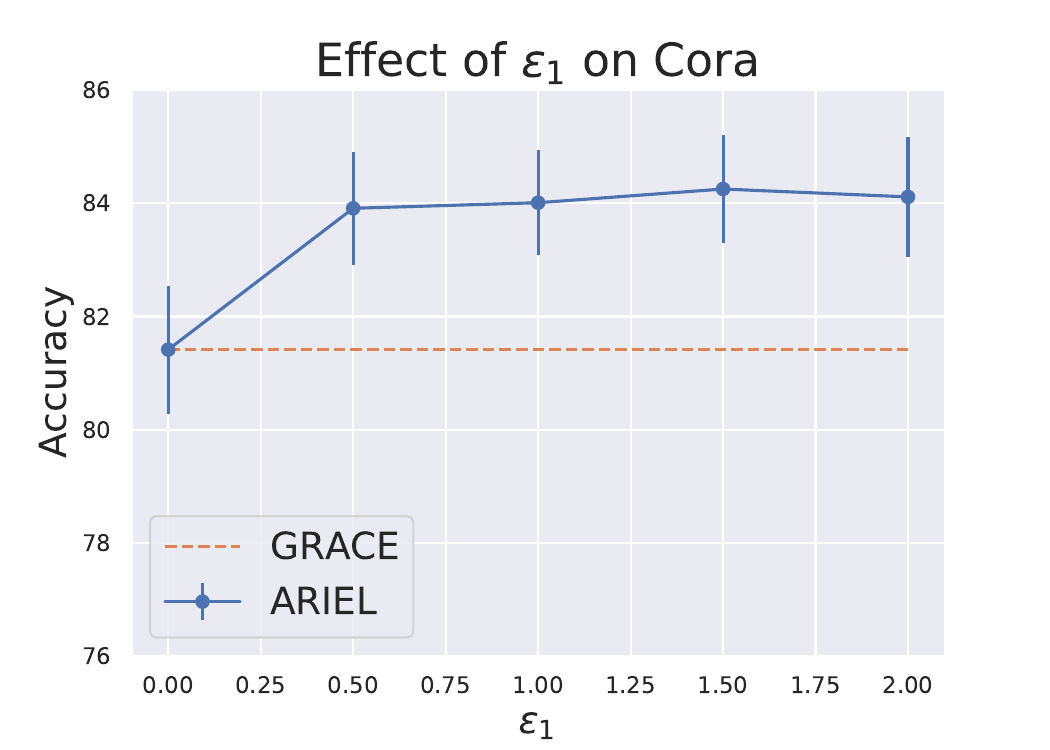}
    \end{minipage}
    \begin{minipage}{.48\linewidth}
    \includegraphics[width=\linewidth]{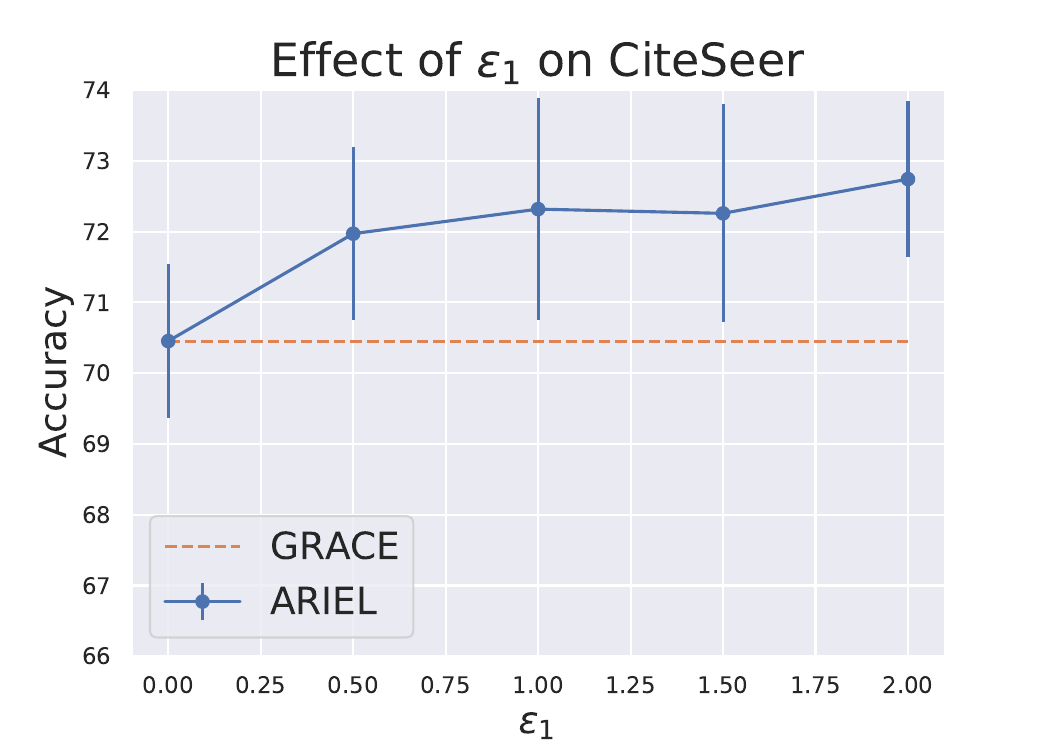}
    \end{minipage}
    \caption{Effect of adversarial contrastive loss coefficient $\epsilon_1$ on Cora and CiteSeer. The dashed line represents the performance of GRACE with subgraph sampling.}
    \vspace{-10pt}
    \label{fig:eps1}
\end{figure}

Next, we fix all other parameters and check the behavior of $\epsilon_2$. Information regularization is mainly designed to stabilize the training of \ariel. We find \ariel\ would experience the collapsing at the early training stage and the information regularization could mitigate this issue. We choose the best run on Amazon-Photo, where the collapsing frequently occurs, and similar to before, we gradually increase $\epsilon_2$ from $0$ to $2$, the results are shown in Figure \ref{fig:eps2} (left).
\begin{figure}[h]
    \begin{minipage}{0.48\linewidth}
    \includegraphics[width=\linewidth]{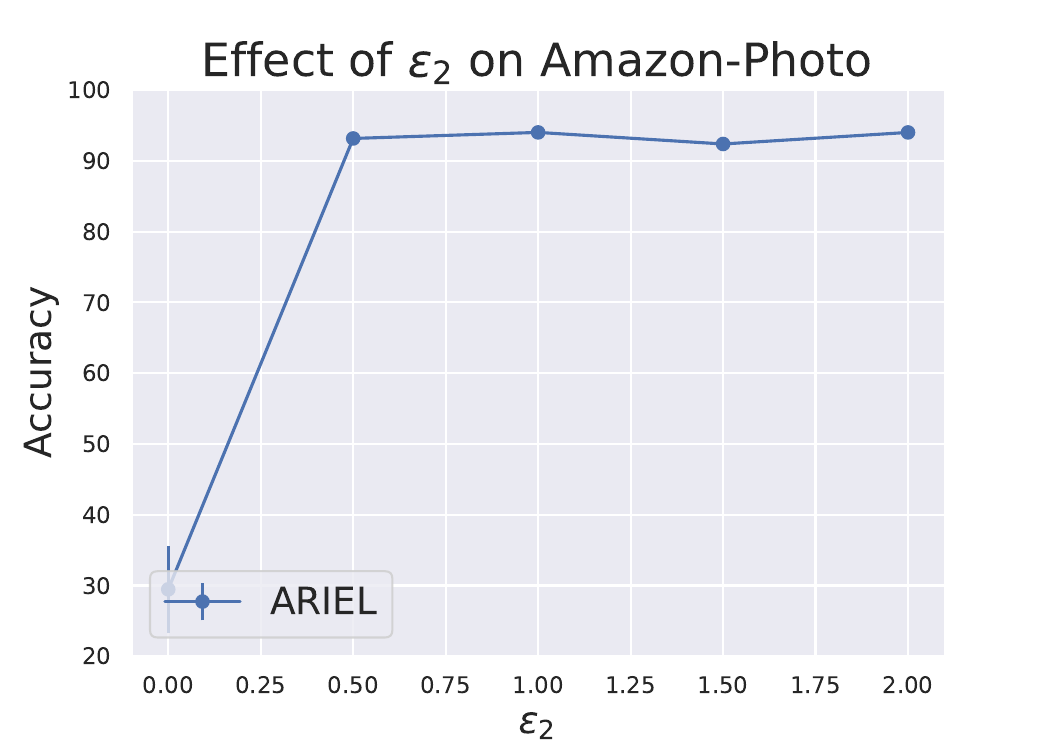}
    \end{minipage}
    \begin{minipage}{.48\linewidth}
    \includegraphics[width=\linewidth]{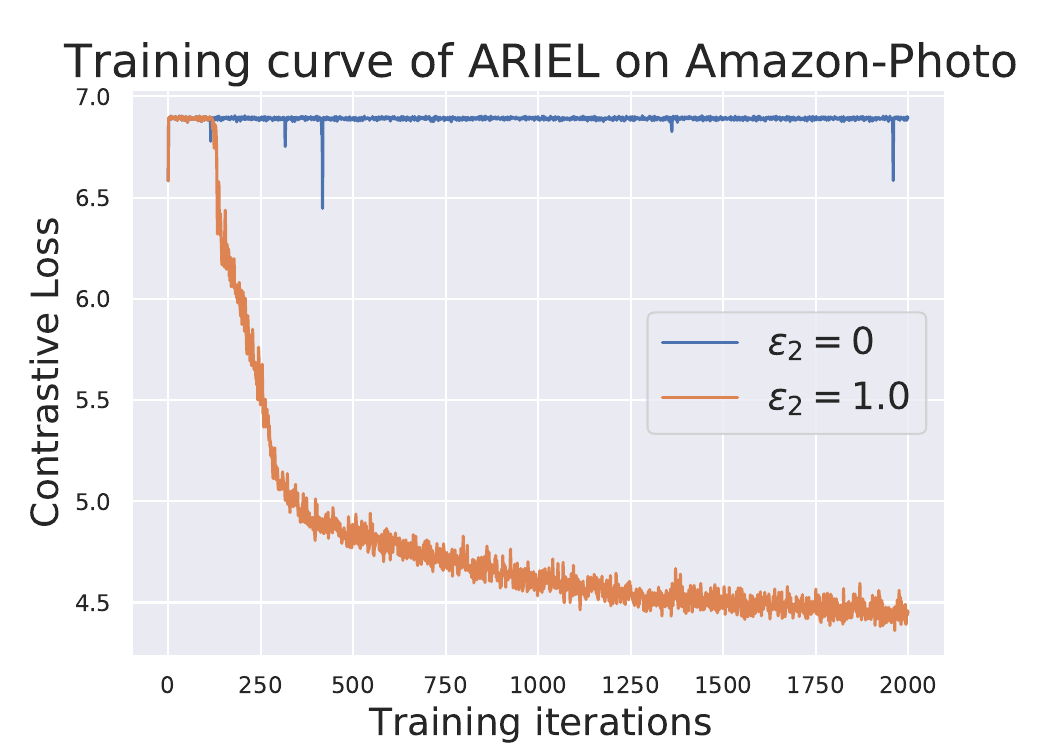}
    \end{minipage}
    \caption{Effect of information regularization on Amazon-Photo. The left figure shows the model performance under different $\epsilon_2$ and the right figure plots the training curve of \ariel\ under $\epsilon_2=0$ and $\epsilon_2=1.0$.}
    \vspace{-5pt}
    \label{fig:eps2}
\end{figure}
As can be seen, without using the information regularization, \ariel\ could collapse without learning anything, while setting $\epsilon_2$ greater than $0$ can effectively avoid this situation. To further illustrate this, we draw the training curve of the regular contrastive loss in Figure \ref{fig:eps2} (right), for the best \ariel\ model on Amazon-Photo and the same model by simply removing the information regularization. Without information regularization, the model could get stuck in a bad parameter and fail to converge, while information regularization can resolve this issue.

\section{Related Work}
In this section, we review the related work in the following three categories: graph contrastive learning, adversarial attack on graphs, and adversarial contrastive learning.

\subsection{Graph Contrastive Learning}
Contrastive learning is known for its simplicity and strong expressivity. Traditional methods ground the contrastive samples on the node proximity in the graph, such as DeepWalk \cite{Perozzi:2014:DOL:2623330.2623732} and node2vec \cite{grover2016node2vec}, which use random walks to generate the node sequences and approximate the co-occurrence probability of node pairs. However, these methods can only learn the embeddings for the graph structures regardless of the node features. 

GNNs \cite{kipf2017semisupervised, velickovic2018graph} can easily capture the local proximity and node features. To further improve the performance, the Information Maximization (InfoMax) principle \cite{linsker1988self} has been introduced. DGI \cite{velickovic2018deep} is adapted from Deep InfoMax \cite{hjelm2019learning} to maximize the mutual information between the local and global features. It generates the negative samples with a corrupted graph and contrasts the node embeddings with the original graph embedding and the corrupted one. Based on a similar idea, GMI \cite{peng2020graph} generalizes the concept of mutual information to the graph domain by separately defining the mutual information on the features and edges. The other follow-up work of DGI, MVGRL \cite{hassani2020contrastive}, maximizes the mutual information between the first-order neighbors and graph diffusion. 
HDMI \cite{jing2021hdmi} introduces high-order mutual information to consider both intrinsic and extrinsic training signals.
However, mutual information-based methods generate the corrupted graphs by simply randomly shuffling the node features.
Recent methods exploit the graph topology and features to generate better-augmented graphs. GCC \cite{Qiu_2020} adopts a random-walk-based strategy to generate different views of the context graph for a node, but it ignores the augmentation of the feature level. GCA \cite{zhu2020deep}, instead, considers the data augmentation from both the topology and feature level, and introduces the adaptive augmentation by considering the importance of each edge and feature dimension. Unlike the above methods which construct the data augmentation samples based on domain knowledge, \ariel\ uses adversarial attacks to construct the view that maximizes the contrastive loss, which is more informative with broader applicability.

\subsection{Adversarial Attack on Graphs}
Deep learning methods on graphs are known vulnerable to adversarial attacks. As shown by Bojchevski et al. \cite{bojchevski2019adversarial}, both random-walk-based methods and GNN-based methods could be attacked by flipping a small portion of edges. Xu et al. \cite{xu2019topology} propose a PGD attack and min-max attack on the graph structure from the optimization perspective. NETTACK \cite{Z_gner_2018} is the first to attack GNNs using both structure attack and feature attack, causing a significant performance drop of GNNs on the benchmarks. After that, Metattack \cite{zugner2018adversarial}  formulates the poisoning attack of GNNs as a meta-learning problem and achieves remarkable performance by only perturbing the graph structure. Node Injection Poisoning Attacks \cite{NIPA} uses a hierarchical reinforcement learning approach to sequentially manipulate the labels and links of the injected nodes. Recently, InfMax \cite{ma2021adversarial} formulates the adversarial attack on GNNs as an influence maximization problem.

\subsection{Adversarial Contrastive Learning}
The concept of adversarial contrastive learning is first proposed on visual domains \cite{kim2020adversarial, jiang2020robust, ho2020contrastive}. All these works propose a similar idea to use the adversarial sample as a form of data augmentation in contrastive learning and it can bring a better downstream task performance and higher robustness. Recently, AD-GCL \cite{suresh2021adversarial} formulates adversarial graph contrastive learning in a min-max form and uses a parameterized network for edge dropping. However, AD-GCL is designed for the graph classification task instead and it does not explore the robustness of graph contrastive learning. Moreover, both visual adversarial contrastive learning methods and AD-GCL deals with independent instances, e.g., independent images or graphs, 
 while the instances studied by \ariel\ have interdependence, which makes scalability a challenging issue.

\section{Conclusion}
In this paper, we propose a simple yet effective framework for graph contrastive learning by introducing an adversarial view and we stabilize it through the information regularization. It consistently outperforms the state-of-the-art graph contrastive learning methods in the node classification task and exhibits a higher degree of robustness to the adversarial attack. Our framework is not limited to the graph contrastive learning framework we build on in this paper, and it can be naturally extended to other graph contrastive learning methods as well.  In the future, we plan to further investigate (1) the adversarial attack on graph contrastive learning and (2) the integration of graph contrastive learning and supervised methods.

\begin{acks}
This work is supported by National Science Foundation under grant No. 1947135,
    DARPA HR001121C0165,
Department of Homeland Security under Grant Award Number 17STQAC00001-03-03, NIFA award 2020-67021-32799 and Army Research Office (W911NF2110088).
The content of the information in this document does not necessarily reflect the position or the policy of the Government, and no official endorsement should be inferred.  The U.S. Government is authorized to reproduce and distribute reprints for Government purposes notwithstanding any copyright notation here on.
\end{acks}
\newpage
\bibliographystyle{ACM-Reference-Format}
\bibliography{main}


\appendix
\clearpage
\section{Additional Experimental Setup Details}
In this section, we describe some experimental setup details not covered in our main text.

The six datasets used in this paper, including \textit{Cora}, \textit{CiteSeer}, \textit{Amazon-Computers}, \textit{Amazon-Photo}, \textit{Coauthor-CS} and \textit{Coauthor-Physics}, are from Pytorch Geometric \footnote{All the datasets are from Pytorch Geometric 1.6.3: \url{https://pytorch-geometric.readthedocs.io/en/latest/modules/datasets.html}}.
Cora and CiteSeer \cite{yang2016revisiting} are citation networks, where nodes represent documents and edges correspond to citations. Amazon-Computers and Amazon-Photo \cite{shchur2019pitfalls} are extracted from the Amazon co-purchase graph. In these graphs, nodes are the goods and they are connected by an edge if they are frequently bought together. Coauthor-CS and Coauthor-Physics are \cite{shchur2019pitfalls} the co-authorship graphs, where each node is an author and the edge indicates the co-authorship on a paper. 

For the evaluation, different from the logistic regression implementation used by DGI \cite{velickovic2018deep}, we use the implementation from scikit-learn \footnote{\url{https://scikit-learn.org/stable/}} directly since it has a better performance.

We search each method over $6$ different random seeds, including  $5$ random seeds from our own and the best random seed of GCA on each dataset. For each seed, we evaluate the method on $20$ random training-validation-testing dataset splits and report the mean and the standard deviation of the accuracy on the best seed. Specifically, for the supervised learning methods, we abandon the existing splits, for example on Cora and CiteSeer, but instead do a random split before the training and report the results over $20$ splits.

On the attacked graphs, since Metattack \cite{zugner2018adversarial} is targeted at graph structure only and computationally inefficient on the large graphs, we first randomly sample a subgraph of $5000$ nodes if the number of nodes in the original graph is greater than $5000$, then we randomly mask out $20\%$ features, and finally use Metattack to perturb $20\%$ edges to generate the final attacked graph.

\section{Hyperparameters of \ariel}
In this section, we give the details about the hyperparameter searching for \ariel\ in our experiment, they are summarized as follows:
\begin{itemize}
    \item Adversarial contrastive loss coefficient $\epsilon_1$ and information regularization strength $\epsilon_2$. We search them over $\{0.5, 1, 1.5, 2\}$ and use the one with the best performance on the validation set 
    of each dataset. Specifically, we first fix $\epsilon_2$ as $0$ and decide the optimal value for other parameters, then we search $\epsilon_2$ on top of the model with all other hyperparameters fixed.
    \item Number of attack steps and perturbation constraint. These parameters are fixed on all datasets, with the number of attack steps 5, edge perturbation constraint $\Delta_{\mathbf{A}}=0.1\sum_{i,j}\mathbf{A}[i,j]$ and feature perturbation constraint $\delta_{\mathbf{X}}=0.5$.
    \item Curriculum learning weight $\gamma$ and change period $T$. In our experiments, we simply fix $\gamma=1.1$ and $T=20$.
    \item Graph perturbation rate $\alpha$ and feature perturbation rate $\beta$. We search both of them over $\{0.001, 0.01, 0.1\}$ and take the best one on the validation set of each dataset.
    \item Subgraph size. We keep the subgraph size $500$ for \ariel\ on all datasets. Although we find increasing the subgraph size benefits \ariel\ on larger graphs, using this fixed size still ensures a solid performance of \ariel.
\end{itemize}

\section{Training Analysis}
Here we compare the training of \ariel\  to other methods on our NVIDIA Tesla V100S GPU with 32G memory.

Adversarial attacks on graphs tend to be highly computationally expensive since the attack requires the gradient calculation over the entire adjacency matrix, which is of size $O(n^2)$. For \ariel, we resolve this bottleneck with subgraph sampling and empirically show that the adversarial training on the subgraph still yields significant improvements, without increasing the number of training iterations. In our experiments, we find GMI the most memory inefficient, which cannot be trained on Coauthor-CS and Coauthor-Physics. For DGI, MVGRL, GRACE, and GCA, the training of them also amounts to $30$G GPU memory on Coauthor-Physics while the training of \ariel\ requires no more than $8$G GPU memory. In terms of the training time, DGI and MVGRL are the fastest to converge, but it takes MVGRL a long time to compute the diffusion matrix on large graphs. \ariel\ is slower than GRACE and GCA on Cora and CiteSeer, but it is faster on large graphs like  Coauthor-CS and Coauthor-Physics, with the training time for each iteration invariant to the graph size due to the subgraph sampling. On the largest graph Coauthor-Physics, each iteration takes GRACE 0.875 second and GCA 1.264 seconds, while it only takes \ariel\ 0.082 second. This demonstrates that \ariel\ has even better scalability than GRACE and GCA.
\clearpage
\end{document}